\documentclass[submission,copyright,creativecommons]{eptcs}
\providecommand{\event}{ICLP 2026} 

\usepackage{iftex}
\usepackage{amsmath}
\usepackage{amssymb}
\usepackage{amsthm}
\usepackage[inline]{enumitem}
\usepackage{tikz-cd}
\usepackage{cancel}

\newtheorem{theorem}{Theorem}

\theoremstyle{definition}
\newtheorem{definition}{Definition}

\newtheorem{example}{Example}

\usepackage{graphicx}

\ifpdf
  \usepackage{underscore}         
  \usepackage[T1]{fontenc}        
\else
  \usepackage{breakurl}           
\fi

\title{Logic Programming Semantics for Causal Processes}
\author{
Felix Weitkämper
\institute{German University Of Digital Science\\
Potsdam, Germany}
\email{felix.weitkaemper@german-uds.de}
}
\def\titlerunning{Logic Programs and Causal Processes }
\def\authorrunning{Felix Weitkämper}
\begin{document}
\maketitle

\begin{abstract}
Motivated by challenging modelling issues in the life sciences, we investigate the relationship between logic programming semantics and the eventual states of causal processes compatible with those logic programs. 
More precisely, we show that while stable models of positive logic programs correspond to the eventual states of processes commencing from a neutral state and continuing undisturbed indefinitely, supported models describe the eventual states reachable from arbitrary starting points. 
This also contributes to the discussion of the appropriate semantics for logic programming as a causal rule language, adding a temporal perspective to recent interpretations of the stable and supported model semantics from an explanatory viewpoint of causality. 
\end{abstract}

\section{Introduction}
Logic programming is among the most well-developed formalisms for modelling in artificial intelligence, with mature and efficient interpreters for Prolog, Datalog and answer set programming and close to half a century of rigorous theoretical work on its semantics \cite{DeclarativeLogicProgramming}. 

It can also be viewed as an incarnation of a calculus of causal rules, whose semantic underpinnings as a theory of causality have been traced back to fundamental philosophical principles \cite{bochman,RueckschlossWeitkaemper:2025,EelinkRW26}. 
Following the theory developed by Bochman \cite{bochman} and Eelink et al.\ \cite{EelinkRW26},  R\"uckschlo{\ss} and Weitk\"amper \cite{RueckschlossWeitkaemper:2025}  show that the supported model semantics emerges naturally from considering only those structures for which every true proposition can be justified by an explanation along the causal rules of the logic program. 
However, in the presence of cycles among the rules, such explanations can be cyclical themselves; therefore,  R\"uckschlo{\ss} and Weitk\"amper \cite{RueckschlossWeitkaemper:2025} argue that the stable model semantics is more meaningful, since it considers precisely those models where causal explanations can be ultimately grounded outside the causal system encoded in the set of rules. 

Following Pearl \cite{Causality}, this line of research describes causal knowledge without any reference to temporal order. 
However, in many cases, it is more natural to model influences as directed in time. 
Under this lens, causal cycles are simply concise descriptions of possible feedback loops. 

\begin{example}\label{ex:burns}
Consider the following example from Eelink et al.\ \cite{EelinkRW26},  R\"uckschlo{\ss} and Weitk\"amper \cite{RueckschlossWeitkaemper:2025}: 
Let $h_1$ and $h_2$ be two neighbouring houses, and consider the logic program with the two clauses 
\begin{align*}
\mathrm{burns}(h_1) \leftarrow \mathrm{burns}(h_2). \\ 
\mathrm{burns}(h_2) \leftarrow \mathrm{burns}(h_1).
\end{align*}
Then we can view this either as a set of explanatory rules, where the burning of one house explains the burning of the other, or we can view this as a blueprint for a process that leads from the burning of one house at a given time $t$ to the burning of the other house at a time $s > t$. 
\end{example}

The temporal viewpoint lends itself well to the modelling of feedback processes in the life sciences, for which we give two particular examples. 
The first example is the modelling of Boolean networks, a key modelling tool for interactions in systems biology \cite{Kauffman69}.
Its close relationship with logic programming  was first discovered by Inoue \cite{Inoue11}, who showed that point attractors (fixed points) of  Boolean networks correspond to supported models of associated logic programs, and that this correspondence holds whether the influences encoded in the Boolean network are evaluated synchronously or asynchronously.

 The second example are symptom networks studied in psychopathology. 
 In this line of work, championed by Borsboom \cite{Borsboom}, the symptoms associated with a diagnosis are arranged in a network of mutual influences, where feedback loops explain the persistence of symptoms over time, even after initial stimuli have passed. 
 While the modelling approaches used by Borsboon are statistical, a Boolean deterministic approximation by logic programs could potentially lead to a  conceptual clarification, and may be particularly valuable because of the strong individual differentiation between patients and the associated difficulty in obtaining sufficient  quantitative data on a specific patient network. 
Following work on major depressive disorders by Cramer \cite{Cramer13},  Borsboom \cite{Borsboom} makes a distinction between a ground state of a patient and additional possible states that may show symptoms of illness that are not displayed otherwise, arguing that ``additional'' stable states beyond the  healthy ground state are indicative of states of mental disorder. 
Haslbeck et al.\ \cite{HaslbeckRRWB22} evaluate existing modelling paradigms in the field and conclude that none of the graphical model approaches currently in use deals with all of potentially asymmetric relationships, multiple stable states and causal feedback loops. 
This too suggests that logic programming semantics could have a valuable role to play if they were formally aligned with temporal causal processes.  
 
\paragraph{Our contribution}
In this contribution, we take the first technical step towards such applications by clarifying the roles of supported and stable models for modelling causal processes that unfold in time, whether synchronous or asynchronous. 
We show that stable models of positive logic programs correspond to the eventual states of processes that progress from an initial neutral state without interruption, while supported models correspond to the possible eventual states of processes that are eventually compatible with the logic program but have possibly experienced perturbation in the past. 

\paragraph{Related Work}
To the best of our knowledge, the first researchers explicitly to relate causal processes and the semantics of logic programs were Vennekens et al.\ \cite{cplogic}, who define a causal process semantics for probabilistic logic programs. 
In their scheme, processes are inherently finite and every probabilistic rule is only called once. 
This is necessary to ensure that the probabilistic semantics are uniquely defined and align with the distribution semantics for probabilistic logic programs.
However, we are more interested in describing possible eventual states of processes, so in particular our processes are in principle unbounded in time and rules can be called repeatedly to update predicate definitions until possibly a fixed eventual state is reached.
Furthermore, we also want to consider the case where predicate updates occur synchronously rather than successively. 
Hence, we consider a rather different  notion of process compatible with a logic program than that considered by Vennekens et al.\ \cite{cplogic}. 

\section{Preliminaries}

We briefly rehearse notation and fundamental concepts of quantifier-free predicate logic. 
A relational signature is a set of \emph{predicate symbols} $R/n$ of given \emph{arities} $n \in \mathbb{N}_0$, as well as a set of  \emph{constants } $c$. 
$\sigma$-terms are either constants from $\sigma$ or one of a countably infinite set of \emph{variables} $x_i\mid \in \mathbb{N}$. 
They are used to form \emph{atoms} of the form $R(t_1\dots, t_n)$, where $R/n$ is a predicate symbol from $\sigma$ and $t_1, \dots, t_n$ are $\sigma$-terms. 
\emph{Formulas} are constructed from atoms by applying the propositional binary connectives $\land, \lor, \rightarrow$ and $\leftrightarrow$ as well as the unary negation operator $\neg$. 
Additionally, $\top$ and $\bot$ are also formulas. 
Atoms, or formulas of the form $\neg \varphi$, where $\varphi$ is an atom, are called \emph{literals}. 
We call a formula \emph{ground} if no variables occur in it. 
To give meaning to relational formulas, we consider the \emph{Herbrand domain} $D_{\sigma}$ of $\sigma$, which is simply its set of constants. 
Then a \emph{grounding} $\iota$  is a map that allocates to every variable an element of $D_{\sigma}$. 
For every formula $\varphi(\vec{x})$ with free variables from $\vec{x}$, the grounding  $\varphi^\iota$ of $\varphi$ by $\iota$ is obtained from $\varphi(\vec{x})$ by replacing every $x_i$ with $\iota(x_i)$. 
A $\sigma$-structure is any set of ground atoms. 
A ground atom $\varphi$ \emph{holds} in a $\sigma$-structure $\mathfrak{M}$, written $\mathfrak{M}\models \sigma$ , if it is included in $\mathfrak{M}$. 
Other ground formulas are evaluated in $\mathfrak{M}$ according to their ordinary truth tables as propositional connectives (see e.g. \cite{sep-logic-propositional}).

\subsection{Logic programming}
We introduce logic programs and the formal semantics necessary to follow our main results and proofs. For a much more detailed discussion of the semantics of logic programming, a field with a long and distinguished history, see  \cite{DeclarativeLogicProgramming}
\begin{definition}
A \emph{logic program} for a relational signature $\sigma$ is a set of \emph{clauses} $H\leftarrow \mathcal{B}$, where $H$ is a $\sigma$-atom and $\mathcal{B}$ a conjunction of $\sigma$-literals. 
Note that includes the empty conjunction, which evaluates as $\top$; clauses with the empty body are written as \emph{facts}, simply as $H$. 
It is called \emph{positive} if every literal ocurring in any clause body is an atom.

With every logic program $\Pi$, we associate the \emph{dependency graph} $G_\Pi$  whose nodes  are the predicates in $\sigma$ and where there is an edge from $P$ to $R$ if there is a clause $H\leftarrow \mathcal{B}$ in $\Pi$ such that $P$ occurs in the body and $R$ occurs in the head (as its predicate symbol).
We call $\Pi$ \emph{acyclic} if $G_\Pi$ is acyclic.  
\end{definition}

Logic programming clauses can be evaluated on a structure by finding all groundings of clause bodies that evaluate to true and then setting their grounded heads also to be true. 
\begin{definition}
For a relational signature $\sigma$ and a logic program $\Pi$ for $\sigma$, the \emph{one-step-operator} $T_\Pi:D^\sigma \rightarrow D^\sigma$ is defined by 
\[T_\Pi (\mathfrak{M}) = \left\{H^\iota \mid \iota\textrm{ is a grounding to }D^{\sigma},  (H \leftarrow \mathcal{B}) \in \Pi, \mathfrak{M}\models \mathcal{B}^\iota \right\}.\]
\end{definition}

Structures that are invariant under applying the clauses of a program are known as \emph{supported models} and were among the first approaches to giving formal meaning to a logic program. 
\begin{definition}
A $\sigma$-structure $\mathfrak{M}$ is a \emph{supported model} of $\Pi$ if $T_\Pi(\mathfrak{M}) = \mathfrak{M}$.
\end{definition}

Supported models can be characterised logically using Clark's completion semantics \cite{ClarkCompletion}, which associates with every logic program, for every relation symbol $R \in \sigma$,  its \emph{$R$-completion}. 
\begin{definition}
For any predicate $R\in \sigma$ and logic program $\Pi$ for $\sigma$,  the \emph{$R$-completion of $\Pi$} is defined as the set of formulas of the form 
$ A \leftrightarrow \bigvee \mathcal{B}^{\iota}_i$, where $A$ varies over all ground atoms with predicate symbol $R$ and for each $A$, $\mathcal{B}^{\iota}_i$ varies over the clause bodies $\mathcal{B}_i$ and groundings $\iota$ such that there is a clause $H \leftarrow \mathcal{B}_i$ in $\Pi$ where $H^\iota = A$.
Note that if there are no such clauses, the empty disjunction evaluates as $\bot$. 
\end{definition}
Then, a model is a supported model if and only if it satisfies the \emph{$R$-completion} of every predicate $R \in \sigma$. 
Furthermore, Clark showed that if $\Pi$ is acyclic, it has exactly one supported model \cite{ClarkCompletion}. 
 
 \begin{example}\label{ex:burns_supp}
 The $\mathrm{burns}/1$-completion of the burning house program of Example \ref{ex:burns} has the formulas $\mathrm{burns}(h_1) \leftrightarrow \mathrm{burns}(h_2)$ and $\mathrm{burns}(h_2) \leftrightarrow \mathrm{burns}(h_1)$. 
 This leads to the supported models $\emptyset$, where neither house burns, and $\{\mathrm{burns}(h_1), \mathrm{burns}(h_2)\}$, where both houses burn. 
 \end{example}
 
For acyclic programs, there is widespread agreement that the unique supported model is the appropriate semantics for the logic program.
However, for cyclic programs, supported models are generally not unique (see Example \ref{ex:burns_supp}).
In the case of  positive programs, there is still however a unique \emph{least} supported model, where only those propositions are true which are enforced by the program. 
\begin{definition}
Let $\Pi$ be a positive logic program. Then a supported model $\mathfrak{M}$ is called \emph{minimal} or \emph{stable} if for all supported models $\mathfrak{M}'$ of $\Pi$, $\mathfrak{M} \subseteq \mathfrak{M}$. In other words, whenever a ground atom $R(\vec{c})$ is valid in $\mathfrak{M}$, it is also valid in $\mathfrak{M}'$. 
\end{definition}

 \begin{example}\label{ex:burns_stable}
 The unique stable model of the burning house program of Example \ref{ex:burns} is $\emptyset$, where neither house burns. 
 \end{example}
 
For more general programs, stable models as defined by Gelfond and Lifshitz \cite{StableModelSemantics} no longer coincide with minimal supported models and are instead defined as minimal models of relative reducts. 
However, for brevity and since our formal results for stable models are restricted to positive programs, we refrain from presenting the general definition here.

\subsection{Processes}
We now formalise our general notion of a process as a logic program describing a sequence of states ($\sigma$-structures) in discrete time, where the state at any time point depends only on the state in the immediately preceding timepoint. For simplicity, from now on, we restrict our signature $\sigma$ to be finite. 
\begin{definition}\label{def:process}
  Let $\sigma$ be a finite relational vocabulary. Then a \emph{process} for $\sigma$ is a logic program in the vocabulary $\bar{\sigma}:=\{R_t \mid R \in \sigma, t \in \mathbb{N}_0 \}$, where the arities of $R_t$ and $R$ coincide, and with the additional condition that in the body of any clause with head predicate $R_t$, only predicates with index $t-1$ occur.
  In particular, clauses with head predicates $R_0$ must be facts.  

  We call $R_t$ the \emph{$t$-point} of $R$, and we call the set of all clauses of $\mathcal{P}$ whose head predicates are $t$-points the \emph{$t$-step} of $\mathcal{T}$.
  In a wider sense, we refer to the $t$-point of any other expression to mean the expression resulting from replacing all relation symbols from $\sigma$ with their $t$-points. 
\end{definition}

Note that since every head predicate is of a later time-step than every body predicate, 
every process is itself an acyclic logic program (in an infinite vocabulary) and therefore has a unique supported model.

\begin{definition}
  Let $\sigma$ and $\bar{\sigma}$ be as in Definition \ref{def:process} and let $\mathfrak{M}$ be a $\bar{\sigma}$-structure.
  Then for any $t \geq 0$, its \emph{$t$-state} $\mathfrak{M}_t$ is the $\sigma$-model whose true ground atoms are precisely those whose $t$-points are true in $\mathfrak{M}$.
  If there is a $t_0 \geq 0$ such for all $t > t_0$, $\mathfrak{M}_t = \mathfrak{M}_{t_0}$, then $\mathfrak{M}_{t_0}$ is called its \emph{eventual state}.
  If the unique supported model of a process has an eventual state, this is also called the \emph{eventual state} of the process.   
\end{definition}

To relate logic programs for $\sigma$ with processes, we relativise their clauses to a given timepoint. 

\begin{definition}
  Let $C := H \leftarrow \mathcal{B}$ be a clause in $\sigma$.
  Then for any $t > 0$, the \emph{relativisation of $C$ to $t$} is the clause obtained by replacing $H$ with its $t$-point  and $\mathcal{B}$ with its $(t-1)$-point.
\end{definition}

We now define what it means for a process to be adequately described by a logic program.
As we do not wish to place any restrictions on the order in which predicates are updated, which can happen either individually or synchronously, every step is associated with a freely-chosen subvocabulary.
Additionally, since we wish to characterise the eventual behaviour of such processes, it is crucial that all clauses remain relevant indefinitely, or, in other words, that program clauses are invoked infinitely often.
Finally, since propositions should not be true for any reasons outside the process in question, we demand that all propositions start out false. 

\begin{definition}\label{def:compatible}
  Let $\Pi$ be a logic program in   a relational vocabulary $\sigma$.
  Let $(\sigma_t)_{t \in \mathbb{N}_+}$ be a sequence of subsets $\sigma_t \subseteq \sigma$ such that   every $R\in\sigma$ lies in infinitely many $\sigma_t$. 
  Then $\mathcal{T}$ the \emph{$\Pi$-compatible process along $(\sigma_t)_{t>0}$} is defined by the following clauses. Any such process is also called simply \emph{compatible with $\Pi$}.
  \begin{enumerate}
  \item The $0$-step of $\mathcal{T}$ is the empty program.
  \item For all $t > 0$, the $t$-step of $\mathcal{T}$ is defined by the following clauses:
    \begin{itemize}
    \item For all $R\in\sigma_t$, the relativisation of each clause in $\Pi$ with head predicate $R$ to $t$;
    \item For all $R\notin\sigma_t$, the clause $R_t(\vec{x}) \leftarrow R_{t-1}( \vec{x})$. 
    \end{itemize}
  \end{enumerate}
\end{definition}

\begin{example}\label{def:burns_process}
A process compatible with the burning houses program $\Pi$ of Example \ref{ex:burns} would have, for each time $t$, either $\sigma_t = \sigma$ or $\sigma_t = \emptyset$, where the former occurs infinitely often. 
The $t$-step is either the relativisation of $\Pi$ to $t$ or the program with the single clause $\mathrm{burns}_t(x) \leftarrow \mathrm{burns}_{t-1}(x)$. 
The $0$-step is $\emptyset$, and thus the $0$-state is also the empty set. 
One can see by a simple induction that in the supported model of this process, all states must be $\emptyset$, as the right-hand side of each equivalence of the completion evaluates as false. 
\end{example}

We now turn to processes whose eventual behaviour is characterised by a logic program, but which may have arbitrary behaviour before then.
Note that every step of the compatible program only makes direct reference to the state immediately preceding. 
Hence, in terms of their eventual behaviour, only the final state of the previous process before entering compatibility with the logic program matters. 
We can therefore simplify further analysis and assume that the final state of that previous process is the 0-state, and that therefore the eventually compatible process differs from a fully compatible one only in that 0-state. 

\begin{definition}  
For any $(\sigma_t)_{t \in \mathbb{N}_+}$ a sequence of subsets $\sigma_t \subseteq \sigma$ such that  every $R\in\sigma$ lies in infinitely many $\sigma_t$ and every $\sigma$-structure $\mathbf{M}$, the  \emph{eventually $\Pi$-compatible process along  $(\sigma_t)_{t  > 0}$  with initial state $\mathfrak{M}$} is the process whose 0-state contains facts corresponding precisely to the ground atoms true in $\mathfrak{M}$ and whose $t$-states for $t>0$ satisfy Point 2  of Definition \ref{def:compatible}.  Such a process is also simply called \emph{eventually compatible with $\Pi$}.
\end{definition}

\begin{example}
Consider a process eventually compatible with the burning houses program $\Pi$ of Example \ref{ex:burns} differs from the description in Example \ref{def:burns_process}. 
If the $0$-state is instead the model where both houses are burning, then this is conserved under the $t$-step regardless of $\sigma_t$. 
Hence, it is also the eventual state.  
On the other hand, if the $0$-state is either $\{\mathrm{burns}(h_1)\}$ or $\{\mathrm{burns}(h_2)\}$, the process will oscillate to the other one of those two states at every $t$ where $\sigma_t = \sigma$. 
As this happens infinitely often, the process does not have an eventual state. 
If $\Pi$ had an additional clause that ensured the persistency of at least one of the ground atoms, such as  $\mathrm{burns}(X) \leftarrow \mathrm{burns}(X)$, $\mathrm{burns}(h_1) \leftarrow \mathrm{burns}(h_1)$ or $\mathrm{burns}(h_2) \leftarrow \mathrm{burns}(h_2)$, then the process would eventually reach the state where both houses are burning and remain there indefinitely. 
\end{example}

\section{Main results}

\subsection{General logic programs}
We turn to the main results characterising supported models and stable models for positive programs. 
As the result for supported models holds in general logic programs, and as logic programs with negation are key to the representation of Boolean networks \cite{Inoue11}, we give this result in full generality. 
This is a slight generalisation of the corresponding results of Inoue \cite{Inoue11}, phrased there in the language of Boolean networks. 
\begin{theorem}\label{thm:supported}
  Let $\Pi$ be a logic program and $\mathcal{T}$ be eventually compatible with  $\Pi$.
  If it exists, the eventual state of $\mathcal{T}$ is then a supported model of $\Pi$, and every supported model of $\Pi$ is the eventual state of a process eventually compatible with $\Pi$.
\end{theorem}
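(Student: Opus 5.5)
The plan is to prove the two directions separately. Both rest on one observation: in the unique supported model $\mathfrak{M}$ of an eventually compatible process $\mathcal{T}$, the $t$-state is determined by the $(t-1)$-state.

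We make this precise by reading off the completion of the $t$-step. For $t>0$ and every ground atom $A=R(\vec c)$:
\begin{itemize}
\item if $R\in\sigma_t$, then $A\in\mathfrak{M}_t$ iff $A\in T_\Pi(\mathfrak{M}_{t-1})$;
\item if $R\notin\sigma_t$, then $A\in\mathfrak{M}_t$ iff $A\in\mathfrak{M}_{t-1}$.
\end{itemize}
This holds because the clauses with head predicate $R_t$ are exactly the relativisations of the $\Pi$-clauses for $R$, respectively the single copying clause. Their bodies mention only $(t-1)$-points, so by the definition of the $t$-state a grounded body $\mathcal{B}^\iota_{t-1}$ holds in $\mathfrak{M}$ iff $\mathcal{B}^\iota$ holds in $\mathfrak{M}_{t-1}$. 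This equivalence also covers negative literals, so the argument works for general programs.

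For the first claim, suppose $\mathfrak{M}_{t_0}$ is the eventual state, so $\mathfrak{M}_t=\mathfrak{M}_{t_0}$ for all $t\ge t_0$. We must show $T_\Pi(\mathfrak{M}_{t_0})=\mathfrak{M}_{t_0}$, and we check this predicate by predicate. Fix $R\in\sigma$. Since $R$ lies in infinitely many $\sigma_t$, there is some $t>t_0$ with $R\in\sigma_t$. For that $t$, the recurrence gives, for each ground atom $A$ with predicate $R$,
\[A\in\mathfrak{M}_{t_0}=\mathfrak{M}_t \iff A\in T_\Pi(\mathfrak{M}_{t-1})=T_\Pi(\mathfrak{M}_{t_0}).\]
Since $T_\Pi(\mathfrak{M}_{t_0})$ and $\mathfrak{M}_{t_0}$ therefore agree on every predicate, they are equal, and $\mathfrak{M}_{t_0}$ is a supported model of $\Pi$.

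For the converse, let $\mathfrak{N}$ be a supported model. Take the eventually $\Pi$-compatible process along the constant sequence $\sigma_t=\sigma$ with initial state $\mathfrak{N}$; any admissible sequence would also do. By induction on $t$ using the recurrence, $\mathfrak{M}_t=T_\Pi(\mathfrak{M}_{t-1})=T_\Pi(\mathfrak{N})=\mathfrak{N}$ for every $t$, and in the case $R\notin\sigma_t$ the copying clause preserves $\mathfrak{N}$ as well. Hence $\mathfrak{N}$ is the eventual state, with $t_0=0$.

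The main obstacle is establishing the recurrence rigorously. This requires using the completion characterisation of the unique supported model of the acyclic process program, noted after Definition~\ref{def:process}. The key point is that the ground disjunction for $R_t(\vec c)$ ranges over exactly the groundings of the relativised $\Pi$-clauses, so it coincides with the condition for $R(\vec c)\in T_\Pi(\mathfrak{M}_{t-1})$. The rest is routine.
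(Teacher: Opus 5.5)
Your proposal is correct and follows the paper's proof essentially step for step. The paper also picks, for each $R$, an update step for $R$ after $t_0$ and transfers the $R_{t+1}$-completion of the process to the $R$-completion of $\Pi$, and for the converse it starts from the supported model with every $\sigma_t=\sigma$. Your explicit recurrence in terms of $T_\Pi$ simply spells out the completion-reading step that the paper does inline.
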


\begin{proof}
  Let $\Pi$ be a logic program and $\mathcal{T}$ be eventually compatible with  $\Pi$, and let $\mathfrak{M}_{t_0}$ be the eventual state of $\mathcal{T}$.
  We want to show that $\mathfrak{M}_{t_0}$ is a supported model of $\Pi$.
  So let $R\in \sigma$ be a predicate symbol.
  Then there is a $t \geq t_0$ such that $R\in \sigma_{t+1}$.
  Since $\mathfrak{M}$ is a supported model of $\mathcal{T}$, we know that $\mathfrak{M}$ satisfies the formulas $(R_{t+1}(\vec{s}) \leftrightarrow \mathcal{B}_t)$ of the $R_{t+1}$-completion of $\mathcal{T}$.
  Furthermore, since $\mathfrak{M}_{t_0}$ is the eventual state, we know that the $t$-points and the $t+1$-points of all relations in $\sigma$ coincide with their $t_0$-points.
  Hence, $\mathfrak{M}_{t_0}$ satisfies the formulas $(R(\vec{s}) \leftrightarrow \mathcal{B})$ of the $R$-completion of $\Pi$.
  Since $R$ was arbitrarily chosen, this suffices.
  Conversely, it is clear that every supported model $\mathfrak{M}$ is also the eventual state of an eventually compatible process, since we simply need to set the $0$-step of $\mathcal{T}$ include facts for every positive atom true in $\mathfrak{M}$ and then set every other $\sigma_t$ to $\sigma$.
  Then every $t$-step of $\mathcal{T}$ corresponds to one application of the one-step operator of $\Pi$, whose fixed points are precisely the supported models.
\end{proof}

We now turn to the characterisation of compatible processes as stable models. 
As this characterisation does not generalise naively to more general classes of programs (see Example \ref{ex:counterexample} below), we now restrict our attention to positive logic programs.

\begin{theorem}\label{thm:stable}
  Let $\Pi$ be a positive logic program and $\mathcal{T}$ be compatible with $\Pi$.
  Then the eventual state of $\mathcal{T}$ is the unique stable model of $\Pi$.
\end{theorem}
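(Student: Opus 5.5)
We must show that the process $\mathcal{T}$ compatible with the positive program $\Pi$ along $(\sigma_t)_{t>0}$ has an eventual state, and that this state equals the least supported model $\mathfrak{L}$ of $\Pi$, i.e.\ the least fixed point $\bigcup_n T_\Pi^n(\emptyset)$ of the monotone operator $T_\Pi$. Let $\mathfrak{M}$ be the unique supported model of $\mathcal{T}$ and $\mathfrak{M}_t$ its $t$-states. The plan is to sandwich the sequence $(\mathfrak{M}_t)_t$ between $\emptyset$ and $\mathfrak{L}$, show it is increasing, and then use that some step uses every predicate.

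\textbf{Step 1: the states increase.} By induction on $t$ we show $\mathfrak{M}_{t-1}\subseteq\mathfrak{M}_{t}$. The base case holds because $\mathfrak{M}_0=\emptyset$. For the inductive step, fix $R$. If $R\notin\sigma_{t+1}$, the $R$-part of $\mathfrak{M}_{t+1}$ copies that of $\mathfrak{M}_t$. Otherwise, the $R$-part of $\mathfrak{M}_{t+1}$ is the $R$-part of $T_\Pi(\mathfrak{M}_t)$. In that case, let $s\le t$ be the last time with $R\in\sigma_s$. If there is none, the $R$-part of $\mathfrak{M}_t$ is empty and the inclusion is trivial. If there is one, the $R$-part of $\mathfrak{M}_t$ equals that of $T_\Pi(\mathfrak{M}_{s-1})$. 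Since $\mathfrak{M}_{s-1}\subseteq\mathfrak{M}_t$ by the induction hypothesis and $T_\Pi$ is monotone (because $\Pi$ is positive), the inclusion follows. The induction hypothesis should therefore be stated as $\mathfrak{M}_0\subseteq\dots\subseteq\mathfrak{M}_t$, so that the jump back to $s-1$ is covered.

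\textbf{Step 2: the states stay below $\mathfrak{L}$.} By induction, $\mathfrak{M}_t\subseteq\mathfrak{L}$: every new atom has the form $H^\iota$ with $\mathfrak{M}_{t-1}\models\mathcal{B}^\iota$, so $\mathfrak{L}\models\mathcal{B}^\iota$ by monotonicity, and $H^\iota\in T_\Pi(\mathfrak{L})=\mathfrak{L}$; unchanged atoms are covered by the hypothesis.

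\textbf{Step 3: stabilisation.} Since $\sigma$ is finite, I take the Herbrand domain to be finite as well, since this is implicit in ``finite vocabulary''. Then there are only finitely many ground atoms, so the increasing sequence stabilises at some $t_0$, and $\mathfrak{M}_{t_0}$ is the eventual state. By Theorem~\ref{thm:supported} (a compatible process is eventually compatible, with $\emptyset$ as initial state), $\mathfrak{M}_{t_0}$ is a supported model. Since $\mathfrak{L}$ is least, $\mathfrak{L}\subseteq\mathfrak{M}_{t_0}$, and combined with Step 2 we get equality.

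\textbf{Main obstacle.} The main obstacle is Step 1: asynchronous updates mean $R$ is computed from a stale state, and the bookkeeping with the last update time $s$ must be done carefully. If the Herbrand domain were infinite, stabilisation would additionally require a separate argument, and this is the point to flag.
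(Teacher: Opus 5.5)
Your proof is correct, but it takes a different route from the paper's.

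\textbf{How the paper argues.} The paper does not use stabilisation by counting atoms, and it does not invoke Theorem~\ref{thm:supported}. It compares $\mathcal{T}$ with the synchronous process $\mathcal{T}'$ (all $\sigma_t=\sigma$), whose states are $T_\Pi^t(\emptyset)$. It then introduces an index $\mathrm{index}_{\mathcal{T}}(t)$ counting how many complete \emph{rounds} have elapsed by time $t$, a round being a stretch in which every predicate has been updated at least once. By induction it proves the sandwich $T_\Pi^{\mathrm{index}_{\mathcal{T}}(t)}(\emptyset)\subseteq\mathfrak{M}_t\subseteq T_\Pi^{t}(\emptyset)$. The least fixed point is reached after some finite number $t_0$ of synchronous steps, and the index is unbounded. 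Hence $\mathfrak{M}_t=\mathfrak{L}$ as soon as $t_0$ rounds have passed.

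\textbf{How your route differs.} Your Step~2 is a coarser, time-independent version of the paper's right-hand inequality. Your lower bound comes from Theorem~\ref{thm:supported} together with leastness, rather than from the round-counting induction.

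\textbf{What each approach buys.}
\begin{itemize}
\item Your route is simpler: there is no index bookkeeping. Your Step~1 also proves the monotonicity $\mathfrak{M}_s\subseteq\mathfrak{M}_t$, which the paper merely asserts (\emph{since $\Pi$ is positive}) yet relies on in its lower-bound induction. Your last-update-time argument is exactly what is needed there.
\item The paper's route gives an explicit convergence bound: the stable model is reached after $t_0$ full rounds, where $t_0$ is the synchronous convergence time.
\item The paper's route needs only that the synchronous iteration reaches $\mathfrak{L}$ in finitely many steps. Your stabilisation instead rests on a pigeonhole argument over a finite Herbrand base. That assumption is legitimate here, since in the paper a finite signature includes its constants.
\end{itemize}
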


\begin{proof}
  The stable model of $\Pi$ is the least fixed point of the one-step operator of $\Pi$, reached after finitely many applications (say, $t_0$ applications).
  Let $\mathcal{T}'$ be the compatible process for which $\sigma_t(\mathcal{T}') = \sigma$ for every $t>0$, and let $\mathfrak{M}'$ be its model.
  As every step of $\mathcal{T}'$ corresponds to one application of the one-step operator, $\mathfrak{M}'_t$ is the state after $t$ applications of the one-step operator, and the eventual state of $\mathcal{T}$ is exactly the stable model of $\Pi$.  
  
  Furhermore, for any $t \geq 0$, let $\mathrm{index}_{\mathcal{T}}(t)$ be defined recursively as follows:
  \begin{itemize}
  \item For all $t \geq 0$,  $\mathrm{index}_{\mathcal{T}}(t) \geq 0$.
  \item For all $i > 0$ and  $t > 0$, $\mathrm{index}_{\mathcal{T}}(t) \geq i$ if there is an $s < t$ with  $\mathrm{index}_{\mathcal{T}}(0) \geq i-1$ such that for every $R\in \sigma$ there is a $s < t_R \leq t $ such that $R\in \sigma_{t_R}{\mathcal{T}}$.
  \item For all  $t \geq 0$, $\mathrm{index}_{\mathcal{T}}(t) = i$ if $\mathrm{index}_{\mathcal{T}}(t) \geq i$ and not $\mathrm{index}_{\mathcal{T}}(t) \geq i+1$.
  \end{itemize}
  The index is always well-defined and at most $t$, and since every $R$ occurs in infinitely many $\sigma_t$, the index is also always unbounded.  
    We show that for every compatible process $\mathcal{T}$ and every $t > 0$, $\mathfrak{M}'_{\mathrm{index}_{\mathcal{T}}(t)} \leq \mathfrak{M}_t \leq \mathfrak{M}'_t$.
  This then implies that for every $t$ with $\mathrm{index}_{\mathcal{T}}(t) \geq t_0$, $\mathfrak{M}_t$ is the stable model of $\Pi$.
  As the index is unbounded, this suffices to prove that the stable model of $\Pi$ is the eventual state of $\mathcal{T}$.

  So it remains to show that for every $t > 0$, $\mathfrak{M}'_{\mathrm{index}_{\mathcal{T}}(t)} \leq \mathfrak{M}_t \leq \mathfrak{M}'_t$.
  We proceed by induction on $t$.
  We first show that this is true for $t = 1$.
  If $\sigma_1(\mathcal{T}) = \sigma$, then $\mathrm{index}_{\mathcal{T}}(1) = 1$ and both inequalities collapse to equality.
  If $\sigma_1(\mathcal{T}) \neq \sigma$, then $\mathrm{index}_{\mathcal{T}}(1) = 0$.
  Hence, the first inequality holds as no ground atoms are true in $\mathfrak{M}'_0$ and the second inequality holds true since for all $R\in \sigma_1$, their interpretations in $\mathfrak{M}_1$ and $\mathfrak{M}'_1$ coincide, while for all $R\notin \sigma_1$, no ground atoms with those predicates hold in $\mathfrak{M}_1$.
  So assume that the statement holds for all $s < t$.
  We proceed to show the statement for $t$.
  Note that since $\Pi$ is positive, $\mathfrak{M}_s \leq \mathfrak{M}_t$ for all $s<t$, and likewise for $\mathfrak{M}'_s$ and $\mathfrak{M}'_t$.

  We first show $\mathfrak{M}'_{\mathrm{index}_{\mathcal{T}}(t)} \leq \mathfrak{M}_t$.
  So let $i := \mathrm{index}_{\mathcal{T}}(t) > 0$.
  We need to show that for every ground atom $R(\vec{c})$, $R(\vec{c})$ is true in $\mathfrak{M}_t$ if it is true in  $\mathfrak{M}'_{\mathrm{index}_{\mathcal{T}}(t)}$.
  Let $R\in\sigma$ and let $s,t_R$ be as in the definition of the index of $t$.
  Then $\mathfrak{M}'_{i-1} \leq \mathfrak{M}_s \leq \mathfrak{M}_{t_R-1}$ and thus since $\Pi$ is positive, every $R(\vec{c})$ true in $\mathfrak{M}'_{i}$ is also true in $\mathfrak{M}_{t_R}$ (as the result of applying the same positive clauses of $\Pi$ to $\mathfrak{M}'_{i-1}$ and $\mathfrak{M}_{t_R-1}$) and hence in $\mathfrak{M}_t$.    

  We then turn to $\mathfrak{M}_t \leq \mathfrak{M}'_t$.
  This follows immediately from the induction hypothesis applied to $t-1$, though, since all clauses in $\Pi$ are positive. 
  We have therefore shown that when $\Pi$ is positive, every process compatible with $\Pi$ has an eventual state equal to the stable model of $\Pi$, as desired.
\end{proof}


\section{Conclusion and outlook}
We showed that when viewing (positive) logic programs as shorthand for classes of processes, stable and supported models have clearly demarkated roles.
Stable models correspond to fixed points of  processes allowed to proceeed undisturbed from an initial state of neutrality (where none of the propositions in the signature are true).
Supported models, on the other hand, correspond to arbitrary fixed points of processes, which may be reached by perturbations of the initial state. 
In this way, they also capture latent cycles, which may not be activated in the ordinary course of affairs but which may be reached when extraordinary perturbations lead to a change of state at even a single time point. 

Our results beg the question whether they could be extended beyond positive programs and processes reaching an unchanging eventual state. 
Regarding the first point, the situation is more complex already for stratified logic programs, and the direct analogue of Theorem \ref{thm:stable} can fail even when stable and minimal models still coincide:

\begin{example}\label{ex:counterexample}
Let $\Pi$ be the logic program with clauses $a.$, $b \leftarrow \neg a.$ and $b \leftarrow b$. 
This (stratified) program has a unique stable (and minimal) model, where $a$ is true and $b$ is false, and one additional supported model where both $a$ and $b$ are true. 
Unlike in the situation of Theorem \ref{thm:stable}, both supported models occur as fixed points of compatible processes. 
Consider the process where for all $t > 0$, $\sigma_t := \sigma$.
Then, since $a$ is false in the $0$-state, $a$ and $b$ are true in the $1$-state. 
Both propositions remain true in all subsequent states, and thus its eventual state is the model where both propositions are true.   
Now consider the process where $\sigma_1 := \{a\}$ and for all $t > 1$, $\sigma_t := \sigma$.
In this case, only $a$ is true in the 1-state, and since the 1-state is already a supported (and indeed the stable) model, it is also its eventual state. 
\end{example}

This could be remedied by restricting the definition of a compatible process for stratified programs to respect the order of stratification, as in the probabilistic process semantics of Vennekens et al. \cite{cplogic}. 
However, for the purposes of causal process modelling outlined in the introduction, it seems more important to describe a broad, natural and general class of processes rather than try and ensure a uniquely defined eventual state semantics. 

Going  beyond eventual states to processes in which perhaps only the truth values of some propositions are eventually fixed, while others oscillate, seems potentially a very fruitful endeavour. 
This situation seems to correspond naturally to a three-valued semantics, so it would be particularly interesting to see whether Theorem \ref{thm:supported} generalises to the three-valued supported models of Fitting and Kunen \cite{Fitting85,Kunen87}.
Such a characterisation could be a powerful knowledge representation tool, as it would alllow for formalising the possible eventual behaviour of eventually compatible processes in full generality. 

Another promising extension that would align our work more closely with the statistical approach to network models prevalent in much of the literature would be to extend the models with probabilistic components. 
In principle, the distribution semantics \cite{distribution_semantics,distribution_semantics1} provides a flexible way of attaching probabilities to generic logic programming frameowrks. 
However, simply attaching probabilistic facts to the original program and evaluating it under the distribution semantics would lead back to the set-up of Vennekens et al.\ \cite{cplogic}, which does not match well with describing patterns of repeated probabilistic application of the same rules. 

\subsection*{Acknowledgements}
I would like to thank Julia v.\ Thienen, Kilian R\"uckschlo{\ss} and Kailin Weitk\"amper for fruitful discussions. 

\bibliographystyle{eptcs}
\bibliography{generic}

\begin{thebibliography}{10}
\providecommand{\bibitemdeclare}[2]{}
\providecommand{\surnamestart}{}
\providecommand{\surnameend}{}
\providecommand{\urlprefix}{Available at }
\providecommand{\url}[1]{\texttt{#1}}
\providecommand{\href}[2]{\texttt{#2}}
\providecommand{\urlalt}[2]{\href{#1}{#2}}
\providecommand{\doi}[1]{doi:\urlalt{https://doi.org/#1}{#1}}
\providecommand{\eprint}[1]{arXiv:\urlalt{https://arxiv.org/abs/#1}{#1}}
\providecommand{\bibinfo}[2]{#2}

\bibitemdeclare{book}{bochman}
\bibitem{bochman}
\bibinfo{author}{Alexander \surnamestart Bochman\surnameend}
  (\bibinfo{year}{2021}): \emph{\bibinfo{title}{{A Logical Theory of
  Causality}}}.
\newblock \bibinfo{publisher}{The MIT Press}.
\newblock \urlprefix\url{https://doi.org/10.7551/mitpress/12387.001.0001}.

\bibitemdeclare{article}{Borsboom}
\bibitem{Borsboom}
\bibinfo{author}{Denny \surnamestart Borsboom\surnameend}
  (\bibinfo{year}{2017}): \emph{\bibinfo{title}{A network theory of mental
  disorders}}.
\newblock {\slshape \bibinfo{journal}{World Psychiatry}}
  \bibinfo{volume}{16}(\bibinfo{number}{1}), pp. \bibinfo{pages}{5--13},
  \doi{10.1002/wps.20375}.

\bibitemdeclare{inproceedings}{ClarkCompletion}
\bibitem{ClarkCompletion}
\bibinfo{author}{Keith~L. \surnamestart Clark\surnameend}
  (\bibinfo{year}{1978}): \emph{\bibinfo{title}{Negation as Failure}}.
\newblock In: {\slshape \bibinfo{booktitle}{Logic and Data Bases}},
  \bibinfo{publisher}{Springer US}, \bibinfo{address}{Boston, MA}, pp.
  \bibinfo{pages}{293--322}, \doi{10.1007/978-1-4684-3384-5_11}.

\bibitemdeclare{misc}{Cramer13}
\bibitem{Cramer13}
\bibinfo{author}{Angelique \surnamestart Cramer\surnameend}
  (\bibinfo{year}{2013}): \emph{\bibinfo{title}{The glue of (ab)normal mental
  life: Networks of interacting thoughts, feelings and behaviors}}.
\newblock \urlprefix\url{https://hdl.handle.net/11245/1.397502}.
\newblock \bibinfo{note}{Doctoral thesis, Amsterdam}.

\bibitemdeclare{article}{EelinkRW26}
\bibitem{EelinkRW26}
\bibinfo{author}{Guus \surnamestart Eelink\surnameend}, \bibinfo{author}{Kilian
  \surnamestart Rückschloß\surnameend} \& \bibinfo{author}{Felix
  \surnamestart Weitkämper\surnameend} (\bibinfo{year}{2026}):
  \emph{\bibinfo{title}{How artificial intelligence leads to knowledge why: An
  inquiry inspired by Aristotle’s Posterior Analytics}}.
\newblock {\slshape \bibinfo{journal}{Int. J. Approx. Reason.}}
  \bibinfo{volume}{190}(\bibinfo{number}{109603}),
  \doi{10.1016/j.ijar.2025.109603}.

\bibitemdeclare{article}{Fitting85}
\bibitem{Fitting85}
\bibinfo{author}{Melvin \surnamestart Fitting\surnameend}
  (\bibinfo{year}{1985}): \emph{\bibinfo{title}{A Kripke-Kleene semantics for
  logic programs}}.
\newblock {\slshape \bibinfo{journal}{J. Log. Program.}}
  \bibinfo{volume}{2}(\bibinfo{number}{4}), pp. \bibinfo{pages}{295--312},
  \doi{10.1016/S0743-1066(85)80005-4}.

\bibitemdeclare{incollection}{sep-logic-propositional}
\bibitem{sep-logic-propositional}
\bibinfo{author}{Curtis \surnamestart Franks\surnameend}
  (\bibinfo{year}{2024}): \emph{\bibinfo{title}{{Propositional Logic}}}.
\newblock In: {\slshape \bibinfo{booktitle}{The {Stanford} Encyclopedia of
  Philosophy}}, \bibinfo{edition}{{W}inter 2024} edition,
  \bibinfo{publisher}{Metaphysics Research Lab, Stanford University}.
\newblock
  \urlprefix\url{https://plato.stanford.edu/archives/win2024/entries/logic-propositional/}.

\bibitemdeclare{inproceedings}{StableModelSemantics}
\bibitem{StableModelSemantics}
\bibinfo{author}{Michael \surnamestart Gelfond\surnameend} \&
  \bibinfo{author}{Vladimir \surnamestart Lifschitz\surnameend}
  (\bibinfo{year}{1988}): \emph{\bibinfo{title}{The Stable Model Semantics for
  Logic Programming}}.
\newblock In: {\slshape \bibinfo{booktitle}{Proceedings of International Logic
  Programming Conference and Symposium}}, \bibinfo{publisher}{MIT Press}, pp.
  \bibinfo{pages}{1070--1080}.
\newblock \urlprefix\url{http://www.cs.utexas.edu/users/ai-lab?gel88}.

\bibitemdeclare{article}{HaslbeckRRWB22}
\bibitem{HaslbeckRRWB22}
\bibinfo{author}{Jonas \surnamestart Haslbeck\surnameend},
  \bibinfo{author}{Oisin \surnamestart Ryan\surnameend},
  \bibinfo{author}{Donald~J \surnamestart Robinaugh\surnameend},
  \bibinfo{author}{Lourens~J \surnamestart Waldorp\surnameend} \&
  \bibinfo{author}{Denny \surnamestart Borsboom\surnameend}
  (\bibinfo{year}{2022}): \emph{\bibinfo{title}{Modeling psychopathology: From
  data models to formal theories.}}
\newblock {\slshape \bibinfo{journal}{Psychol. Methods}}
  \bibinfo{volume}{27}(\bibinfo{number}{6}), p. \bibinfo{pages}{930},
  \doi{10.1037/met0000303}.

\bibitemdeclare{inproceedings}{Inoue11}
\bibitem{Inoue11}
\bibinfo{author}{Katsumi \surnamestart Inoue\surnameend}
  (\bibinfo{year}{2011}): \emph{\bibinfo{title}{Logic programming for Boolean
  networks}}.
\newblock In: {\slshape \bibinfo{booktitle}{Proceedings of the Twenty-Second
  International Joint Conference on Artificial Intelligence - Volume Volume
  Two}}, \bibinfo{series}{IJCAI'11}, \bibinfo{publisher}{AAAI Press}, p.
  \bibinfo{pages}{924–930}, \doi{10.5591/978-1-57735-516-8/IJCAI11-160}.

\bibitemdeclare{article}{Kauffman69}
\bibitem{Kauffman69}
\bibinfo{author}{Stuart \surnamestart Kauffman\surnameend}
  (\bibinfo{year}{1969}): \emph{\bibinfo{title}{Homeostasis and differentiation
  in random genetic control networks}}.
\newblock {\slshape \bibinfo{journal}{Nature}}
  \bibinfo{volume}{224}(\bibinfo{number}{5215}), pp. \bibinfo{pages}{177--178},
  \doi{10.1038/224177a0}.

\bibitemdeclare{book}{DeclarativeLogicProgramming}
\bibitem{DeclarativeLogicProgramming}
\bibinfo{editor}{Michael \surnamestart Kifer\surnameend} \&
  \bibinfo{editor}{Yanhong~Annie \surnamestart Liu\surnameend}, editors
  (\bibinfo{year}{2018}): \emph{\bibinfo{title}{Declarative logic programming:
  theory, systems, and applications}}.
\newblock \bibinfo{publisher}{Association for Computing Machinery and Morgan \&
  Claypool}.

\bibitemdeclare{article}{Kunen87}
\bibitem{Kunen87}
\bibinfo{author}{Kenneth \surnamestart Kunen\surnameend}
  (\bibinfo{year}{1987}): \emph{\bibinfo{title}{Negation in logic
  programming}}.
\newblock {\slshape \bibinfo{journal}{J. Log. Program.}}
  \bibinfo{volume}{4}(\bibinfo{number}{4}), pp. \bibinfo{pages}{289--308},
  \doi{10.1016/0743-1066(87)90007-0}.

\bibitemdeclare{book}{Causality}
\bibitem{Causality}
\bibinfo{author}{J.~\surnamestart Pearl\surnameend} (\bibinfo{year}{2000}):
  \emph{\bibinfo{title}{{C}ausality}}, \bibinfo{edition}{2nd} edition.
\newblock \bibinfo{publisher}{Cambridge University Press},
  \doi{10.1017/CBO9780511803161}.

\bibitemdeclare{article}{distribution_semantics1}
\bibitem{distribution_semantics1}
\bibinfo{author}{David \surnamestart Poole\surnameend} (\bibinfo{year}{1993}):
  \emph{\bibinfo{title}{Probabilistic Horn abduction and {B}ayesian networks}}.
\newblock {\slshape \bibinfo{journal}{Artificial Intelligence}}
  \bibinfo{volume}{64}(\bibinfo{number}{1}), pp. \bibinfo{pages}{81--129},
  \doi{10.1016/0004-3702(93)90061-F}.

\bibitemdeclare{inproceedings}{RueckschlossWeitkaemper:2025}
\bibitem{RueckschlossWeitkaemper:2025}
\bibinfo{author}{Kilian \surnamestart Rückschloß\surnameend} \&
  \bibinfo{author}{Felix \surnamestart Weitkämper\surnameend}
  (\bibinfo{year}{2025}): \emph{\bibinfo{title}{How Rules Represent Causal
  Knowledge: Causal Modeling with Abductive Logic Programs}}.
\newblock In: {\slshape \bibinfo{booktitle}{RuleML+RR-Companion 2025}},
  {\slshape \bibinfo{series}{CEUR Workshop Proceedings}}
  \bibinfo{volume}{4083}, \bibinfo{publisher}{CEUR-WS.org},
  \bibinfo{address}{Istanbul, Türkiye}.
\newblock \urlprefix\url{https://ceur-ws.org/Vol-4083/paper62.pdf}.

\bibitemdeclare{inproceedings}{distribution_semantics}
\bibitem{distribution_semantics}
\bibinfo{author}{Taisuke \surnamestart Sato\surnameend} (\bibinfo{year}{1995}):
  \emph{\bibinfo{title}{A Statistical Learning Method for Logic Programs with
  Distribution Semantics}}.
\newblock In: {\slshape \bibinfo{booktitle}{Proc. ICLP `95}},
  \bibinfo{publisher}{MIT Press}, pp. \bibinfo{pages}{715--729}.

\bibitemdeclare{article}{cplogic}
\bibitem{cplogic}
\bibinfo{author}{J.~\surnamestart Vennekens\surnameend},
  \bibinfo{author}{M.~\surnamestart Denecker\surnameend} \&
  \bibinfo{author}{M.~\surnamestart Bruynooghe\surnameend}
  (\bibinfo{year}{2009}): \emph{\bibinfo{title}{{C}{P}-logic: A Language of
  Causal Probabilistic Events and Its Relation to Logic Programming}}.
\newblock {\slshape \bibinfo{journal}{Theory Pract. Log. Program.}}
  \bibinfo{volume}{9}(\bibinfo{number}{3}), p. \bibinfo{pages}{245–308},
  \doi{10.1017/S1471068409003767}.

\end{thebibliography}
\end{document}